\newtheorem{proposition}{Proposition}
\newcommand{\cmark}{\ding{51}}%
\newcommand{\xmark}{\ding{55}}%
\newcommand{\stitle}[1]{\vspace{1ex} \noindent{\bf #1.}}
\newcommand{\modelname}{\textsc{GraphCache}\xspace}
\title{\modelname: Message Passing as Caching for Sentence-Level \\Relation Extraction}
\author{
Yiwei Wang$^1$ \ \ \ \ Muhao Chen$^2$ \ \ \ \ Wenxuan Zhou $^2$ \ \ \ \ Yujun Cai$^3$ \\ \textbf{Yuxuan Liang$^1$ \ \ \ \ Bryan Hooi$^1$} \\ 
$^1$ National University of Singapore \quad
$^2$ University of Southern California  \\
$^3$ Nanyang Technological University \\
\texttt{wangyw\_seu@foxmail.com}
}
\date{}
\begin{document}
\maketitle
\begin{abstract}
\textbf{Entity types} and \textbf{textual context} are essential properties for sentence-level relation extraction (RE).
Existing work only encodes these properties within individual instances, which limits the performance of RE given the insufficient features in a single sentence.
In contrast, we model these properties from the whole dataset and use the dataset-level information to enrich the semantics of every instance.
We propose the \modelname (\textbf{Graph Neural Network as Caching}) module, that propagates the features across sentences to learn better representations for RE.
\modelname aggregates the features from sentences in the whole dataset to learn \textbf{global} representations of properties, and use them to augment the \textbf{local} features within individual sentences.
The global property features act as dataset-level prior knowledge for RE, and a complement to the sentence-level features.
Inspired by the classical caching technique in computer systems, we develop \modelname to update the property representations in an online manner.
Overall, \modelname yields significant effectiveness gains on RE and enables efficient message passing across all sentences in the dataset.
\end{abstract}

\section{Introduction}

Sentence-level relation extraction (RE) aims at identifying the relationship between two entities mentioned in a sentence.
RE is crucial to the structural perception of human language, and also benefits many NLP applications such as automated knowledge base construction \cite{distiawan2019neural}, event understanding \cite{wang2020joint}, discourse understanding \cite{yu2020dialogue}, and question answering \cite{zhao2020condition}.
The modern tools of choice for RE are the large-scale pretrained language models (PLMs) that are used to encode individual sentences, therefore obtaining the sentence-level representations \cite{liu2019roberta,joshi-etal-2020-spanbert,yamada-etal-2020-luke}.

Existing work considers \textbf{entity types} and \textbf{textual context} as essential \textbf{properties} for RE \cite{peng2020learning,peters2019knowledge,zhou2021improved}.
Nonetheless, most existing RE models only capture these properties \textit{locally} within individual instances, while not \textit{globally} modeling them from the whole dataset.
Given the insufficient features of a single sentence, it is beneficial to model these properties from the whole dataset and use them to enrich the semantics of individual instances.

To overcome the aforementioned limitation, we propose to mine the entity and contextual information beyond individual instances so as to further improve the relation representations.
Particularly, we first construct a heterogeneous graph to connect the instances sharing common properties for RE.
This graph includes the sentences and \textit{property caches}.
Each cache represents a property of entity types or contextual topics.
We connect every sentence to the corresponding property caches (see \Cref{fig:1}), and perform message passing over edges based on a graph neural network (GNN).
In this way, the property caches aggregate the features from connected sentences, which will act as a complement to the sentence-level features and provide prior knowledge when identifying relations. 

The constructed graph connecting sentences has the same scale as the whole dataset, which leads to high computational complexity of the GNN.
To address this issue, our idea is to view the message passing of GNNs as data loading in computer systems, adapting the classical caching techniques to efficiently mining the property information from all sentences.
We encapsulate this computational idea in a new GNN module, called \modelname (\textbf{Graph Neural Network as Caching}), that uses an online updating strategy to refresh the property caches' representations.
In addition, we design an attention-based global-local fusion module to augment the sentence-level representations using the property caches with adaptive weights.

\modelname can be incorporated into popular RE models to improve their effectiveness without increasing their time complexity, as analyzed in theory (\Cref{sec:3_2}). 
As far as we know, ours is the first work to propagate the features across instances to enrich the semantics for sentence-level RE.
We evaluate \modelname on three public RE benchmarks including TACRED \cite{zhang2017tacred}, SemEval-2010 task 8 \cite{hendrickx2019semeval}, and  TACREV \cite{alt-etal-2020-tacred}.
Empirical results show that \modelname consistently improves the effectiveness of popular RE models by a significant margin and propagates features between all sentences in an efficient manner.

\begin{figure}[!tb]
	\centering
	\includegraphics[width=1\linewidth]{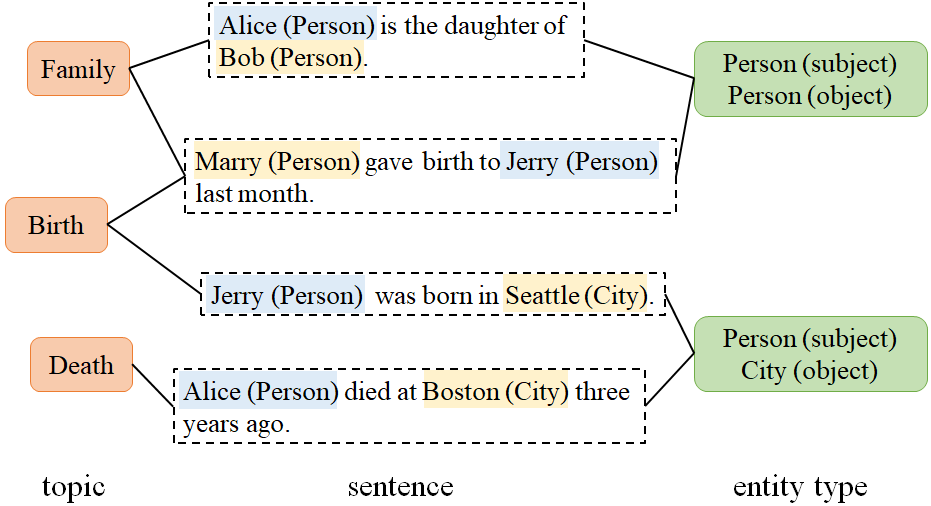}
	\caption{
		We construct a heterogeneous graph to connect the sentences sharing common properties for RE.
		We consider two kinds of properties: contextual topics and entity types.
		\label{fig:1}}
\end{figure}


\section{Related Work}
\stitle{Sentence-Level Relation Extraction}
Early research efforts~\cite{zeng-etal-2014-relation,wang-etal-2016-relation,zhang2017tacred} train RE models from scratch based on lexicon-level features.
Recent work has shifted to fine-tuning pretrained language models (PLMs; \citealt{devlin-etal-2019-bert,liu2019roberta}) resulting in better performance.
For example,
BERT-MTB \cite{baldini-soares-etal-2019-matching} continually finetunes the PLM with a matching-the-blanks objective that decides whether two sentences share the same entity.
SpanBERT \cite{joshi-etal-2020-spanbert} pretrains a masked language model on random contiguous spans to learn span-boundaries and predict the entire masked span.
LUKE \cite{yamada-etal-2020-luke} extends the PLM's vocabulary with entities from Wikipedia and proposes an entity-aware self-attention mechanism.
K-Adapter~\cite{wang2020k} fixes the parameters of the PLM and uses feature adapters to infuse factual and linguistic knowledge.
Despite their effectiveness, most existing work on sentence-level RE exploits the entity information and context within only an individual instance, while we propose to globally capture the semantic information from the whole dataset to augment the relation representations.
Our model can be flexibly plugged into existing RE models and improve their effectiveness without increasing the time complexity.

\stitle{Graph Neural Networks for Natural Language Processing}
Due to the large body of work on applying GNNs to NLP, we refer readers to a recent survey \cite{wu2021graph} for a general review.
GNNs have been explored in several NLP tasks such as semantic role labeling \cite{marcheggiani2017encoding}, machine translation \cite{bastings2017graph}, and text classification \cite{henaff2015deep,defferrard2016convolutional,kipf2016semi,peng2018large,yao2019graph}.
GNNs have also been widely adopted in various variants of relation extraction on the sentence level, \cite{zhang2018graph,zhu2019graph,guo-etal-2019-attention}, the document level \cite{sahu2019inter,christopoulou2019connecting,nan-etal-2020-reasoning,zeng2020double}, and the dialogue level \cite{xue2021gdpnet}.
However, on the sentence-level relation extraction, most existing work \cite{zhang2018graph,guo2019attention,wu2019simplifying} uses the graph neural networks to encode the relation representations from individual instances instead of operating the message passing between instances.
In contrast, we build a heterogeneous graph to connect the instances that share the properties for RE, and design the caching updater to efficiently perform the message passing between instances.

\section{Methodology}

\stitle{Task Definition} 
Sentence-level relation extraction (RE) aims to identify the relation between a pair of entities in a sentence.
In this task, each instance is composed of a sentence, the subject and object entities, and entity types.
For example, in the sentence \textit{`\uwave{Mary} gave birth to \underline{Jerry} at the age of 21.'}\footnote{We use \underline{underline} and \uwave{wavy line} to denote subject and object respectively by default.}, \textit{`Mary'} and \textit{`Jerry'} are the entities, the entity types are both \textit{person}, and the ground-truth relation between \textit{`Jerry'} and \textit{`Mary'} is \textit{parent}.

We propose \modelname (Graph Neural Networks as Caching) as a message passing methodology to model the dataset-level property representations and use them to enrich every instance's semantics.
\modelname creates a graph representation where sentences with shared property information are connected with property caches.
\modelname first models the global semantic information by aggregating the features from the whole dataset, and then fuses the global and local features to augment the relational representations for every sentence.

We analogize the message passing in GNNs to caching in computer systems. 
Caching is about loading data from high volume disks to low volume caches, so as to accelerate data loading.
Analogously, when GNNs perform the message passing between sentences through a smaller number of bridge nodes, we can think of the massive sentences in the dataset as the disk data, and the properties, which aggregates the features from sentences, as caches.
\modelname can be flexibly plugged into existing RE models.
As far as we know, ours is the first work to propagate the features between instances to enrich the semantics for RE.
\modelname takes an existing RE model as the backbone, e.g., BERT, and takes the sentence-level  representations given by the backbone as the inputs of message passing.

A \modelname module consists of three key components:
(i) \textit{A }\textit{graph construction technique} builds a few property caches. Each cache represents a property for RE: entity type or contextual topic.
We connect each sentence to its corresponding properties, so that every property aggregates the features from its neighbor sentences.
(ii) \textit{Caching message passing} aggregates the sentence-level representations to model the properties' representations in an online manner.
(iii) \textit{Global-local fusion} fuses the global property representations and local sentence-level ones to augment the relation representations.
Next, we will discuss the three main components in more detail.

\subsection{Graph Construction for Sentence-level Relation Extraction}\label{sec:3_1}

We build a large and heterogeneous graph to connect the sentences sharing the properties: \textbf{entity types} and \textbf{textual context}, which are essential for RE \cite{peng2020learning,peters2019knowledge,zhou2021improved}.
The heterogeneous graph is defined as $G = (\mathcal{V}, \mathcal{E})$, where $\mathcal{V}$ is the set of nodes, and $\mathcal{E}$ is the set of edges. 
$\mathcal{V} = \mathcal{V}_S \cup \mathcal{V}_P$, where $\mathcal{V}_S$ is the set of sentences, and $\mathcal{V}_P = \mathcal{V}_C \cup \mathcal{V}_E$ is the property caches. 
Here $\mathcal{V}_C$ is the set of latent topics \cite{zeng2018topic} 
mined from the latent topics from the text corpus using LDA \cite{blei2003latent}, which has been found effective in modeling useful contextual patterns \cite{jelodar2019latent}.
Each topic is represented by a probability distribution over the words, and we assign each sentence to the top $P$ topics with the largest probabilities.
$\mathcal{V}_E$ is the set of entity types, where every cache represents the types of an entity pair.
The entity types are also crucial for predicting relations \cite{peng2020learning,zhou2021improved}. 
An edge $(p,s)\in\mathcal{E}$ exists if the sentence $s\in \mathcal{V}_S$ has the property $p\in \mathcal{V}_P$.

We will implement a GNN on this graph.
Specifically, to incorporate the global property information into relation extraction, the property caches aggregates the features from the connected neighboring sentences. 
This step enables property caches to globally model the properties from the whole dataset.
We then use the global property representations from the caches to enrich every sentence's semantics.
In this way, the property caches act as prior knowledge when identifying relations and provide each sentence with more representative features.

\begin{figure*}[!tb]
	\centering
	\includegraphics[width=1\linewidth]{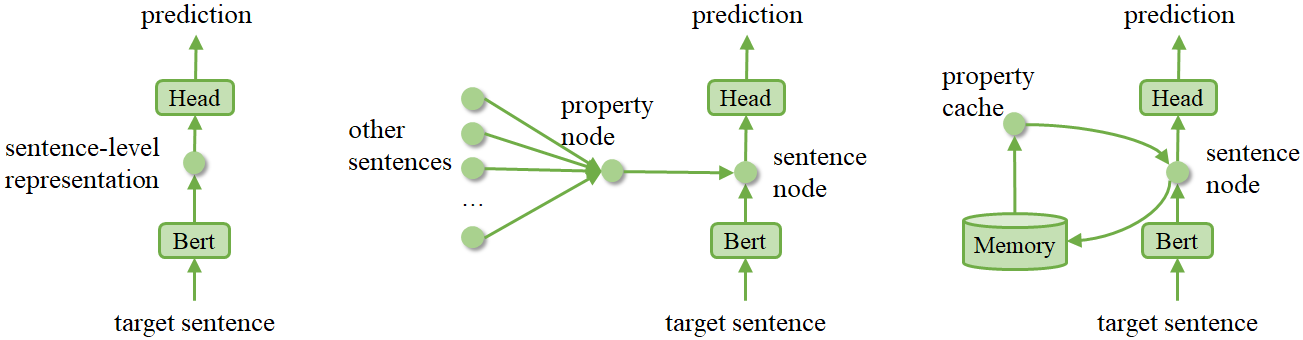}
	\caption{
		(left) Existing models encode individual instances for RE.
		(middle) In standard GNNs \cite{kipf2016semi}, we predict for an instance by aggregating the features from many other sentences in the dataset, leading to high time complexity.
		(right) Our \modelname implements a caching updater (see \Cref{eq:update}) to update the properties' representations in an online manner, which significantly reduces the time complexity.
		\label{fig:2}}
\end{figure*}

\subsection{Caching Message Passing} \label{sec:3_2}
We take an existing RE model as the backbone, e.g., BERT \cite{devlin-etal-2019-bert}, which produces the sentence-level representation as $\mathbf{h}_s$.
Next, we deploy a two-layer GNN on our heterogeneous graph for message passing across sentences. 
Specifically, the first GNN layer aggregates the sentence-level representations to property caches at the $t$th training step:
\begin{align}\label{eq:gnn1}
\bar{\mathbf{h}}_p(t) & = \mathrm{MEAN}\left(\{\mathbf{h}_{s}(t), s \in \mathcal{N}(p)\}\right), \nonumber
\\
\mathbf{h}_p(t) & = \mathrm{FFN}\left(\bar{\mathbf{h}}_p(t)\right),
\end{align}
where $p\in \mathcal{V}_P$ is a property, $s\in \mathcal{N}(p)$ is a sentence having property $p$, $\mathrm{MEAN}(\cdot)$ is the mean aggregator \cite{hamilton2017inductive}, and $\mathrm{FFN}(\cdot)$ is the feed-forward network.
$\mathrm{FFN}(\cdot)$ can be a linear layer in SGC \cite{wu2019simplifying}, a linear layer followed by a nonlinear activation function in GraphSAGE \cite{hamilton2017inductive}, or a multi-layer perception in GIN \cite{xu2018powerful}, etc.
We follow SGC \cite{wu2019simplifying} to implement $\mathrm{FFN}(\cdot)$ by default. 
For each property $p$, this layer aggregates the sentence-level representations $\mathbf{h}_s(t)$ from $s \in \mathcal{N}(p)$ to obtain a global property embedding $\mathbf{h}_p(t)$. 
In this way, the generalized context of each property is captured from the whole dataset, which is further used to enhance the relation representations for each sentence in the second GNN layer. 
We describe the details of the second GNN layer in \Cref{sec:second-GNN}.

\begin{algorithm}[!t]
	\caption{\modelname for Relation Extraction}\label{alg:3}
	\begin{flushleft}
		\textbf{Input:} The number of training steps $T$, the dataset $\mathcal{D} = \{se_s, r_s| s = 1, 2, \dots, N\}$, where $se_s, r_s$ are the sentence and relation of the $s$th instance, our graph $G$ defined in \Cref{sec:3_1}, and the batch size $B$.
		\\
		\textbf{Output:} The model's trained parameters.
	\end{flushleft}
	\begin{algorithmic}[1]
	    \State Initialize the model's parameters as random values, and initialize the values of memory $\mathcal{M}$ and property caches $\hat{\mathbf{h}}_p(t)$ as zero.
		\For{$t$ $\leftarrow$ 1 to $T$}
		\State Sample a batch $\mathcal{B}(t)$ from $\mathcal{D}$.
		\For{$s$ in $\mathcal{B}(t)$}
		\State $\mathbf{h}_s(t) \leftarrow \mathrm{Backbone}$($se_s$)
		\EndFor
		\State \textbf{end for}
		\For{$p$ in $\mathcal{V}_p$}
		\State Update $\hat{\mathbf{h}}_p(t)$ as \Cref{eq:update}.
		\State $\mathbf{h}_p(t) \leftarrow \mathrm{FFN}(\hat{\mathbf{h}}_p(t))$ as \Cref{eq:gnn1}.
		\EndFor
		\State \textbf{end for}
		\For{$s$ in $\mathcal{B}(t)$}
		\State Update $\hat{r}_s(t)$ as \Cref{eq:head}.
		\State $\mathcal{M}[s] \leftarrow \mathbf{h}_s(t)$.
		\EndFor
		\State \textbf{end for}
		\State Back-propagate to update the parameters by minimizing the cross entropy loss between $\hat{r}_s(t)$ and $r_i$ of instances in $\mathcal{B}$.
		\EndFor
		\State \textbf{end for}
	\end{algorithmic}
\end{algorithm}

Recall our heterogeneous graph for RE defined in \Cref{sec:3_1}. 
At each training step, classical GNNs perform message passing across edges between the sentences and properties.
In this case, the time complexity of the first GNN layer at each training step is $|\mathcal{E}|$. 
Note that $|\mathcal{E}|$ is larger than $|V_s|$, which is the number of sentences in the dataset. 
This leads to poor scalability of GNN, since $|V_s|$ is large in practice. 

To address this efficiency issue, we propose Caching GNN for RE in \Cref{alg:3}.
Our \modelname implements a memory dictionary $\mathcal{M}$ to store the sentence-level representations from the backbone.
To keep consistency with the updating parameters during training, we deploy a caching updater to refresh the properties' representations at each training step:

\begin{align}\label{eq:update}
&\hat{\mathbf{h}}_p(t) \nonumber\\ 
= & \mathrm{Updater}(\hat{\mathbf{h}}_p(t - 1) , \{\mathbf{h}_s(t), s \in \mathcal{B}(t)\})\nonumber\\
= & \hat{\mathbf{h}}_p(t - 1) + \sum_{s \in \mathcal{N}(p) \cap \mathcal{B}(t)} \frac{\mathbf{h}_{s}(t) - \mathcal{M}[s]}{|\mathcal{N}(p)|},
\end{align}
where $\mathcal{B}(t)$ denotes the batch at the $t$th training step. 
By doing so,  \modelname greatly reduces the time complexity from $|\mathcal{E}|$ to $|\mathcal{B}(t)|$ at each training step by using $\mathrm{Updater}$ to obtain the property caches' representations $\hat{\mathbf{h}}_p(t)$.

Our caching updater is much more efficient than the classical message passing of GNNs, since $|\mathcal{B}(t)| \ll |V_s| < |\mathcal{E}|$ generally holds in practice.
When we aggregate the sentence-level representations from $\mathcal{M}$, we provide the following proposition to show that our cache updater is as effective as the first GNN layer in \Cref{eq:gnn1}.

\begin{proposition}
	At the $t$th training step, denote the property caches' representations in the first GNN layer (see \Cref{eq:gnn1}) as $\bar{\mathbf{h}}_p(t)$, and those returned by our updater in \Cref{eq:update} as $\hat{\mathbf{h}}_p(t)$.
	There is $\hat{\mathbf{h}}_p(t) = \bar{\mathbf{h}}_p(t)$ for $\forall p\in \mathcal{V}_P, t > 0$.
\end{proposition}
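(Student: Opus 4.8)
The plan is to prove the identity by induction on the training step $t$, with the incremental correction in the updater telescoping against the running value stored in the memory $\mathcal{M}$. Before starting the induction I would fix notation to remove the ambiguity in how $\mathbf{h}_s(t)$ is interpreted inside the mean aggregator of \Cref{eq:gnn1}: for a sentence $s\notin\mathcal{B}(t)$ the backbone is not re-run at step $t$, so $\mathbf{h}_s(t)$ must denote the most recently computed representation of $s$, i.e. the value held in $\mathcal{M}[s]$. Writing $\mathcal{M}_t[s]$ for the contents of the memory after the write-back $\mathcal{M}[s]\leftarrow\mathbf{h}_s(t)$ at step $t$ (with $\mathcal{M}_0[s]=0$ by the initialization in \Cref{alg:3}), this gives $\mathcal{M}_t[s]=\mathbf{h}_s(t)$ if $s\in\mathcal{B}(t)$ and $\mathcal{M}_t[s]=\mathcal{M}_{t-1}[s]$ otherwise, so the target quantity becomes $\bar{\mathbf{h}}_p(t)=\frac{1}{|\mathcal{N}(p)|}\sum_{s\in\mathcal{N}(p)}\mathcal{M}_t[s]$.

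I would then record the crucial timing fact read off the loop order in \Cref{alg:3}: the updater of \Cref{eq:update} runs before the memory write-back, so the $\mathcal{M}[s]$ it reads is $\mathcal{M}_{t-1}[s]$, while for $s\in\mathcal{B}(t)$ the fresh $\mathbf{h}_s(t)$ is exactly the value $\mathcal{M}_t[s]$ that $s$ is about to be assigned. Hence \Cref{eq:update} can be rewritten purely in memory terms as $\hat{\mathbf{h}}_p(t)=\hat{\mathbf{h}}_p(t-1)+\frac{1}{|\mathcal{N}(p)|}\sum_{s\in\mathcal{N}(p)\cap\mathcal{B}(t)}\bigl(\mathcal{M}_t[s]-\mathcal{M}_{t-1}[s]\bigr)$, which is the form the induction consumes.

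For the induction the base case is immediate: at $t=0$ both $\hat{\mathbf{h}}_p$ and every $\mathcal{M}[s]$ are initialized to zero, so $\hat{\mathbf{h}}_p(0)=0=\bar{\mathbf{h}}_p(0)$. Assuming $\hat{\mathbf{h}}_p(t-1)=\bar{\mathbf{h}}_p(t-1)=\frac{1}{|\mathcal{N}(p)|}\sum_{s\in\mathcal{N}(p)}\mathcal{M}_{t-1}[s]$, I would substitute into the rewritten update, split the sum over $\mathcal{N}(p)$ into the in-batch part $\mathcal{N}(p)\cap\mathcal{B}(t)$ and the out-of-batch remainder, and observe that the correction term upgrades each in-batch summand from $\mathcal{M}_{t-1}[s]$ to $\mathcal{M}_t[s]$ while the out-of-batch summands already satisfy $\mathcal{M}_{t-1}[s]=\mathcal{M}_t[s]$. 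The two parts recombine into $\frac{1}{|\mathcal{N}(p)|}\sum_{s\in\mathcal{N}(p)}\mathcal{M}_t[s]=\bar{\mathbf{h}}_p(t)$, closing the induction for every $p$ and $t$.

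The arithmetic itself is routine once this bookkeeping is in place; the only real obstacle is exactly that bookkeeping --- making the $\mathbf{h}_s(t)$-versus-$\mathcal{M}[s]$ convention precise and respecting the read-before-write ordering of \Cref{alg:3}. Getting either wrong (for instance reading $\mathcal{M}_t$ rather than $\mathcal{M}_{t-1}$ in the updater, or treating $\mathbf{h}_s(t)$ as a freshly recomputed value for out-of-batch sentences) breaks the telescoping and the equality fails, so I would state both conventions explicitly at the outset rather than leave them implicit.
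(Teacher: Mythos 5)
Your proof is correct and takes essentially the same route as the paper's: induction on $t$, splitting the sum over $\mathcal{N}(p)$ into the in-batch part (where the correction term upgrades the stored value to the fresh one) and the out-of-batch remainder (unchanged), then recombining into the mean. The only difference is that you make explicit the timing conventions --- read-before-write of $\mathcal{M}$, and $\mathbf{h}_s(t)=\mathcal{M}[s]$ for out-of-batch sentences --- which the paper's proof uses implicitly when it identifies $\hat{\mathbf{h}}_p(t-1)$ with $\sum_{s\in\mathcal{N}(p)}\mathcal{M}[s]/|\mathcal{N}(p)|$; this is a clarification of the same argument, not a different one.
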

\begin{proof}
When $t > 1$, if $\hat{\mathbf{h}}_p(t-1) = \bar{\mathbf{h}}_p(t-1)$, we have:
\begin{align}
&\hat{\mathbf{h}}_p(t) \nonumber\\ 
= & \mathrm{Updater}(\hat{\mathbf{h}}_p(t - 1) , \{\mathbf{h}_s(t), s \in \mathcal{B}(t)\})\\
= & \hat{\mathbf{h}}_p(t - 1) + \sum_{s \in \mathcal{N}(p) \cap \mathcal{B}(t)} \frac{\mathbf{h}_{s}(t) - \mathcal{M}[s]}{|\mathcal{N}(p)|} \nonumber\\
= & \sum_{s \in \mathcal{N}(p)} \frac{ \mathcal{M}[s]}{|\mathcal{N}(p)|} + \sum_{s \in \mathcal{N}(p) \cap \mathcal{B}(t)} \frac{\mathbf{h}_{s}(t) - \mathcal{M}[s]}{|\mathcal{N}(p)|} \\
= & \sum_{s \in \mathcal{N}(p) \backslash \mathcal{B}(t)} \frac{ \mathcal{M}[s]}{|\mathcal{N}(p)|} + \sum_{s \in \mathcal{N}(p) \cap \mathcal{B}(t)} \frac{\mathbf{h}_{s}(t)}{|\mathcal{N}(p)|} \\
=& \bar{\mathbf{h}}_p(t).
\end{align}
Besides, because $\hat{\mathbf{h}}_p(0) = \bar{\mathbf{h}}_p(0)$ for $\forall p\in \mathcal{V}_P$ holds as initialized in Alg. \Cref{alg:3}, we have
$\hat{\mathbf{h}}_p(t) = \bar{\mathbf{h}}_p(t)$ for $\forall p\in \mathcal{V}_P, t > 0$.
\end{proof}
\subsection{Global-Local Fusion} \label{sec:second-GNN}
In the second GNN layer, we propagate the properties' representations from the property cache to their neighboring sentences in the batch. 
Since a sentence $s$ may have more than one latent topic $|\mathcal{V}_C \cap \mathcal{N}(i)| > 1$, we utilize the attention mechanism to enable the target sentence to attend to different topics with adaptive weights.
\begin{align*}
\mathbf{h}_{s}^{topic}(t) = \mathrm{Attention}(\mathbf{h}_{s}(t), 
\{\mathbf{h}_{p}(t), p \in \mathcal{V}_C \}),
\end{align*}
where we follow \cite{vaswani2017attention} to implement $\mathrm{Attention}$.
The output $\mathbf{h}_{s}^{topic}(t)$ is the topic embedding fused for sentence $s$.
In this way, a sentence can be trained to attend to more relevant topics with higher weights.

Next, we have the entity type embedding of sentence $s$ as $\mathbf{h}_{s}^{entity}(t) = \mathbf{h}_{p}(t), p \in \mathcal{V}_E \cap \mathcal{N}(s)$, where $p \in  \mathcal{V}_E \cap \mathcal{N}(s)$ is the entity type node connected to sentence $s$.
$\mathbf{h}_{s}^{topic}(t)$ and $\mathbf{h}_{s}^{entity}(t)$ are the global representations of the properties related to sentence $s$, while $\mathbf{h}_s$ is the local representation of sentence $s$.
We fuse the global and local representations to enrich the semantics of sentence $s$ through a sentence-wise head:
\begin{align}\label{eq:head}
\hat{r}_i(t) = \mathrm{Head}\left(\mathbf{h}_{s}(t) \| \mathbf{h}_{s}^{topic}(t) \| \mathbf{h}_{s}^{entity}(t)\right),
\end{align}
where $\|$ denotes concatenation.
\modelname makes sentence-wise relation predictions $\hat{r}_i(t)$ using a sentence-wise Head, implemented as a multi-layer perception (MLP), analogous to a PointNet \cite{qi2017pointnet}.
Since \modelname predicts a relation label for each sentence, it can be trained by standard task-specific classification losses, e.g., cross-entropy \cite{mannor2005cross}.
During inference, we take $\hat{r}_i(t)$ after convergence as the output for RE.

\section{Experiments}

In this section, we evaluate the effectiveness of our \modelname method when incorporated into various RE models.
We compare our methods against a variety of strong baselines on the task of sentence-level RE.
We closely follow the experimental setting of the previous work \cite{zhang2017tacred,zhou2021improved,zhang2018graph} to ensure a fair comparison, as detailed below.

\subsection{Experimental Settings}
\stitle{Datasets}
We use the standard sentence-level RE datasets: TACRED \cite{zhang2017tacred}, SemEval-2010 Task 8 \cite{hendrickx2019semeval}, and TACREV \cite{alt2020tacred} for evaluation.
TACRED contains over 106k mention pairs drawn from the yearly TAC KBP challenge.
SemEval does not provide entity type annotations, for which we only construct the topic caches for message passing.
\citet{alt2020tacred} relabeled the development and test sets of TACRED to build TACREV.
The statistics of these datasets are shown in \Cref{tab:data}.
We follow \cite{zhang2017tacred} to use F1-micro as the evaluation metric.

\begin{table}[tb!]
	\centering
	
	\begin{adjustbox}{width=\linewidth}
		
		\begin{tabular}{@{}l| c c c c @{}}
			\toprule
			\textbf{Dataset}
			& $\#$\textbf{Train}	
			& $\#$\textbf{Dev}
			& $\#$\textbf{Test}
			& $\#$\textbf{Classes} \\ 
			\midrule
			\midrule
			TACRED & 68,124 & 22,631 & 15,509 & 42 \\
			SemEval & 6,507 & 1,493 & 2,717 & 19 \\
			TACREV & 68,124 & 22,631 & 15,509 & 42 \\
			\bottomrule
		\end{tabular}
		
	\end{adjustbox}
		\caption{Statistics of datasets.}\label{tab:data}
\end{table}

\begin{table}[tb!]
	\centering
	\setlength{\tabcolsep}{1pt}
	\begin{adjustbox}{width=\linewidth}
		\begin{tabular}{@{}l c c c @{}}
		   
			\toprule
			\textbf{Method}
			& \textbf{TACRED}	
			& \textbf{SemEval}
			& \textbf{TACREV}\\ 
			\midrule
			\midrule
			PA-LSTM \cite{zhang2017tacred} & 65.1 & 82.1 & 73.3 \\
			GCN \cite{zhang2018graph} & 64.0 & 80.7 & 71.9 \\
			C-GCN \cite{zhang2018graph} & 66.4 & 84.2 & 74.6 \\
			C-SGC \cite{wu2019simplifying} & 67.0 & 84.8 & 75.1 \\
			SpanBERT \cite{joshi-etal-2020-spanbert} & 70.8 & 86.1 & 78.0 \\
			RECENT \cite{lyu2021relation} & 75.2 & 85.8 & 83.0 \\
			IRE$_{\mathrm{BERT}}$ \cite{zhou2021improved} & 72.9 & 86.4 & 81.3 \\
			\midrule
			LUKE \cite{yamada-etal-2020-luke} & 72.7 & 87.8 & 80.6 \\
			LUKE + \modelname (ours) & 74.8 & \textbf{89.1} & 81.5 \\
			\midrule
			IRE$_{\mathrm{RoBERTa}}$ \cite{zhou2021improved} & 74.6 & 87.5 & 83.2 \\
			IRE$_{\mathrm{RoBERTa}}$ + \modelname (ours) & \textbf{75.5} & 88.2 & \textbf{84.2} \\
			\bottomrule
		\end{tabular}

	\end{adjustbox}
	\caption{F1 scores (\%) of Relation Extraction on the test set of TACRED, SemEval, and TACREV. The best results in each column are highlighted in \textbf{bold} font.\label{tab:1}}
\end{table}

\stitle{Compared Methods}
We compare \modelname with the following state-of-the-art RE models:
(1) \textbf{PA-LSTM}~\cite{zhang2017tacred} extends the bi-directional LSTM by incorporating positional information to the attention mechanism.
(2) \textbf{GCN}~\cite{zhang2018graph} uses a graph convolutional network to gather relevant contextual information along syntactic dependency paths.
(3) \textbf{C-GCN}~\cite{zhang2018graph} combines GCN and LSTM, leading to improved performance than each method alone.
(4) \textbf{C-SGC}~\cite{wu2019simplifying} simplifies GCN by removing the nonlinear layers and achieves higher effectiveness. 
(5) \textbf{SpanBERT}~\cite{joshi-etal-2020-spanbert} extends BERT by introducing a new pretraining objective of continuous span prediction.
(6) \textbf{RECENT} \cite{lyu2021relation} restricts the candidate relations based on the entity types.
(7) \textbf{LUKE}~\cite{yamada-etal-2020-luke} pretrains the language model on both large text corpora and knowledge graphs and further proposes an entity-aware self-attention mechanism. 
(8) \textbf{IRE}~\cite{zhou2021improved} proposes an improved entity representation technique in data preprocessing, which enables RoBERTa to achieve state-of-the-art performance on RE.

\stitle{Model Configuration}
For the hyper-parameters of the considered baseline methods, e.g., the batch size, the number of hidden units, the optimizer, and the learning rate, we set them as 
those in the original papers.
For 
LDA used in \modelname, we set the number of topics $K$ as 50, and the number of top relevant topics for every sentence $P$ as 2.
For all experiments, we report the median F-1 scores of five runs of training using different random seeds.

\subsection{Overall Performance}\label{sec:4_1}

We incorporate the \modelname framework with LUKE and IRE$_{\mathrm{RoBERTa}}$, and report the results in
\Cref{tab:1}.
Our \modelname method improves LUKE by 2.9\% on TACREV, 1.5\% on SemEval, and 1.1\% on TACREV in the F1 score.
For IRE$_{\mathrm{RoBERTa}}$, \modelname leads to the improvement of 1.2\% on TACRED, 0.8\% on SemEval, 1.2\% on Re-TACRED.
As a result, our \modelname achieves substantial improvements for LUKE and IRE$_{\mathrm{RoBERTa}}$ and enables them to outperform the baseline methods.

Note that LUKE and IRE$_{\mathrm{RoBERTa}}$ are both 
based on large pre-trained models, which 
have sufficiently large learning capacity to encode the individual instances.
In this case, our \modelname still improves their effectiveness by a large margin, which validates the benefits of modeling the properties: entity types and contextual topics, globally from the whole dataset.
This is due to the use of the global property representations that enrich the semantics of each instance, which effectively act as prior knowledge that helps identify the relations and complements the sentence-level features.

\subsection{Efficiency and Effectiveness of \modelname}
As analyzed in \Cref{sec:3_2}, \modelname enhances the backbone RE models without increasing their time complexity. 
In the experiments, we analyze the efficiency and effectiveness of \modelname on the TACRED dataset, following the experimental setting of RE in \Cref{sec:4_1}.

The methods we evaluate include IRE$_{\mathrm{RoBERTa}}$, IRE$_{\mathrm{RoBERTa}}$ implemented with classical GNN for message passing, and IRE$_{\mathrm{RoBERTa}}$ with our \modelname.
\Cref{tab:time} reports the performance, where `Time' is the training time until convergence using a Linux Server with an Intel(R) Xeon(R) E5-1650 v4 @ 3.60GHz CPU and a GeForce GTX 2080 GPU.

\begin{table}[tb!]
\small
	\centering
	
	\begin{adjustbox}{width=\linewidth}
		\begin{tabular}{@{}l|c|c|c@{}}
			\toprule
			\textbf{Method} & \textbf{Complexity} & \textbf{Time} & \textbf{F1 (\%)} \\ 
			\midrule
			\midrule
			IRE$_{\mathrm{RoBERTa}}$ \cite{zhou2021improved} & $\mathcal{O}\mathcal(B)$ & 7492s & 74.6 \\ 
			\midrule
			IRE$_{\mathrm{RoBERTa}}$ + GNN & $\mathcal{O}(N)$ &  N.A. & N.A. \\ 
			IRE$_{\mathrm{RoBERTa}}$ + \modelname (ours) & $\mathcal{O}(B)$ & 7681s & \textbf{75.5} \\
			\bottomrule
		\end{tabular}
	\end{adjustbox}
	\caption{Training time, the time complexity per training step, and F1 scores of IRE$_{\mathrm{RoBERTa}}$ with our proposed message passing implemented as GNN and \modelname on TACRED. 
	The training time of IRE$_{\mathrm{RoBERTa}}$ with the classical GNN is unavailable due to the our-of-memory error. $B$ and $N$ are the batch and dataset sizes respectively.}\label{tab:time}
\end{table}

We notice that, compared with the classical message passing of GNN, our \modelname method significantly reduces the time complexity per training step.
As a result, our \modelname method takes significantly less training time than the classical GNN method, and exhibits similar efficiency to the original IRE$_{\mathrm{RoBERTa}}$ without message passing between sentences.
The running time and F1 of IRE$_{\mathrm{RoBERTa}}$ with GNN is unavailable due to the out-of-memory error.
This agrees with the theoretical analysis in \Cref{sec:3_2}.
$N$ and $B$ denote the data and batch sizes respectively.
IRE$_{\mathrm{RoBERTa}}$'s time complexity is $\mathcal{O}(B)$, which is the same as the original RoBERTa, while the time complexity of RoBERTa with GNN is $\mathcal{O}(N)$, being significantly higher than our \modelname.
In practice, $N$ is generally large, and $N \gg B$, e.g., $|\mathcal{E}| > 1 \times 10^5$ and $B < 100$ holds for TACRED and state-of-the-art models.

\begin{table}[tb!]
	\centering

	\begin{adjustbox}{width=\linewidth}
		
		\begin{tabular}{@{}l| c c @{}}
			\toprule
			\textbf{Method}
			& \textbf{TACRED}	
			& \textbf{TACREV} \\ 
			\midrule
			\midrule
			LUKE \cite{yamada-etal-2020-luke} & 76.5 & 82.9 \\
			LUKE + \modelname (ours) & 78.9 & 85.6\\
			\midrule
			IRE$_{\mathrm{RoBERTa}}$ \cite{zhou2021improved} & 78.7 & 86.9\\
			IRE$_{\mathrm{RoBERTa}}$ + \modelname (ours) & \textbf{80.1} & \textbf{88.2} \\
			\bottomrule
		\end{tabular}
		
	\end{adjustbox}
	\caption{Test F1 scores (\%) of Relation Extraction on the filtered test sets (see \Cref{sec:unseen}), i.e., the instances containing unseen entities.}\label{tab:unseen}

\end{table}

In terms of effectiveness, our \modelname leads to substantial improvements for RoBERTa. 
Our \modelname enriches the input features for RE on every sentence by utilizing the dataset-level information beyond the individual sentences.
\modelname implements the attention module to incorporate the global property features from different topic caches with adaptive weights, which capture the most relevant information for the target relation.
The improvements in effectiveness are rooted in the message passing mechanism between sentences, which mines the property information beyond individual instances and acts as a complementary to the sentence-level semantics.
Our \modelname method resolves the efficiency issues of message passing based on the caching mechanism, which updates the properties' representations in an online manner.

\begin{table}[!tb]
	\centering
	\vspace{1mm}
	\begin{adjustbox}{width=\linewidth}
		
		\begin{tabular}{@{}l|c|c|c@{}}
			\toprule 
			\textbf{Technique}
			& \textbf{F1 (\%)}
			& \textbf{$\Delta$}
			& \textbf{Cumu $\Delta$}\\ \midrule
			LUKE \cite{yamada-etal-2020-luke} & 72.7 & 0 & 0 \\
			+ Entity Types & 73.4 & +0.7 & +0.7\\
			+ Contextual Topics & 74.8 & \textbf{+1.4} & \textbf{+2.1} \\
			\bottomrule
		\end{tabular}
	\end{adjustbox}
	\caption{Effects of different properties in our heterogeneous graph on the RE of TACRED. \label{tab:components}}
\end{table}

\subsection{Analysis on Unseen Entities} \label{sec:unseen}
Some previous work \cite{zhang2018graph,joshi-etal-2020-spanbert} suggests that RE models may not generalize well to unseen entities.
To evaluate whether the RE models can generalize to unseen entities, existing work designs a filtered evaluation setting \cite{zhou2021improved}.
This setting removes all testing instances containing entities from the training set of TACRED and TACREV, which results in filtered test sets of 4,599 instances on TACRED and TACREV.
These filtered test sets only contain instances with unseen entities during training.

We present the experimental results on the filtered test sets in \Cref{tab:unseen}.
Our \modelname still achieves consistently substantial improvements for LUKE and IRE$_{\mathrm{RoBERTa}}$ on the TACRED and TACREV datasets.
Specifically, our \modelname improves the F1 scores of LUKE by 3.1\% on TACRED, 3.3\% on TACREV, and improves IRE$_{\mathrm{RoBERTa}}$ by 1.8\% on TACRED, 1.5\% on TACREV.
Taking a closer look, we observe that the improvements given by \modelname on the filtered test sets are generally larger than those on the original test sets.
The reason is that our \modelname mines global information from the whole dataset and uses it as the prior knowledge for RE, which is not influenced by the entity names in individual sentences.
When the entity names are new to the RE models, the semantic information is relatively scarce and our mined global information plays a more important role to augment the sentence-level representations.

\begin{figure}[!tb]
	\centering
	\includegraphics[width=0.7\linewidth]{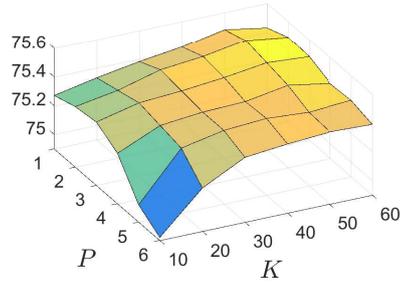}
	\caption{
		The F1 scores (\% in z-axis) of IRE$_{\mathrm{RoBERTa}}$ with \modelname on relation extraction on TACRED with different hyper-parameters $P$ and $K$.\label{fig:hyper}}
\end{figure}

\begin{table*}[!tb]
	\centering
	\renewcommand\arraystretch{2.3}
	\begin{adjustbox}{width=\linewidth}

		\begin{tabular}{@{}lcccc @{}}
			\toprule 
			\textbf{Input sentence}
			& \textbf{Method}
			& \textbf{Prediction}
			& \textbf{Entity type}
			& \textbf{Topic keyword}
			\\ \midrule
			\multirow{2}{6cm}{Founded in \uwave{1947} by two brothers, Eugene and \underline{Quentin Fabris}, New Fabris started out making sewing machine parts in the 1990s.}
			& LUKE & founded \xmark
            & \multirow{2}{3cm}{subject: Person object: Date}
            & \multirow{2}{5cm}{[brother, found, sister, parent, establish, machine, business, organize, instrument, make]} \\

            \cmidrule(l){2-3}
			& + \modelname & \textbf{no\_relation} \cmark &  \\
			\midrule
			\multirow{2}{6cm}{According to the suspect, \underline{Gonzalez} was strangled and buried \uwave{the day} after the video was made, Rosas said.}
			& LUKE & no\_relation \xmark
			& \multirow{2}{3cm}{subject: Person object: Date}
            & \multirow{2}{5cm}{[strangle, die, after, when, injury, day, hospital, police,  murder, later]} \\
			\cmidrule(l){2-3}
			& + \modelname & \textbf{date\_of\_death} \cmark &  \\
			\midrule
			\multirow{2}{6cm}{He was forced to close his bar and now works occasionally at the \uwave{University of Foreigners}, which \underline{Knox} and Kercher attended.}
			& LUKE & no\_relation \xmark
			& \multirow{2}{3cm}{subject: Person object: Organization}
            & \multirow{2}{5cm}{[university, student, attend, opening, work, school, job, professor, exchange, education]} \\
            \cmidrule(l){2-3}
			& + \modelname & \textbf{schools\_attended} \cmark & \\
			\midrule
			\multirow{2}{6cm}{Margaret Garritsen graduated from the \uwave{University of Michigan} as an \underline{American Association of University} scholar.}
			& LUKE & schools\_attended \xmark
			& \multirow{2}{3cm}{subject: Organization object: Organization}
            & \multirow{2}{5cm}{[graduate, government, association, degree, university, technology, science, scholar, receive, research]} \\
            \cmidrule(l){2-3}
			& + \modelname & \textbf{no\_relation} \cmark\\
			\bottomrule
		\end{tabular}
	\end{adjustbox}
	\caption{A case study for LUKE and our \modelname on the relation extraction dataset TACRED.
	We report the predicted relations of different methods, the entity types, and	the top 10 words with the highest probabilities of the topic that the sentence attends with the highest attention weight.}
	\label{tab:case}
\end{table*}

\subsection{Ablation Study}
We investigate the contributions of properties that we consider for constructing the heterogeneous graph.
We apply different kinds of properties sequentially with our \modelname on the LUKE model.
The results are presented in \Cref{tab:components}.
Our entity type nodes improve the effectiveness of LUKE by modeling the entity information globally on the dataset level to enrich the semantics of every sentence.
This finding is consistent with \citet{peng2020learning}, suggesting that the entity information can provide richer information to improve RE.
Furthermore, the contextual topics lead to more significant improvements than the entity types, since the contextual information is fundamental for identifying the relations.

Finally, we analyze the sensitivity of \modelname to the hyper-parameters $K, P$, where $K$ is the number of topics and $P$ is the number of relevant topics assigned to an instance.
The result is visualized in \Cref{fig:hyper}.
We vary $K$ among $\{10, 20, 30, 40, 50, 60\}$ and $P$ among $\{1, 2, 3, 4, 5, 6\}$.
The performance of IRE$_{\mathrm{RoBERTa}}$ with \modelname is relatively smooth when parameters are within certain ranges.
However, extremely small values of $K$ and large $P$ result in poor performances.
Too small $K$ cannot effectively model the complex contextual topics in the large text corpus, while too large $P$ induces irrelevant or noisy features for every instance.
Moreover, only a poorly set hyper-parameter does not lead to significant performance degradation, which demonstrates that our \modelname framework is able to effectively mine the beneficial properties at the dataset level and use them to enhance the relation representations for RE.

\subsection{Case Study}
We conduct a case study to investigate the effects of our \modelname.
\Cref{tab:case} gives a qualitative comparison example between LUKE and the LUKE with our \modelname on the relation extraction dataset TACRED.
The result shows that the global property information that we mine from the whole dataset can guide the RE systems to make correct predictions.
For example, in the first row, we model the global entity type information of the subject as the \textit{person} and the object as the \textit{date} from the whole dataset.
This type information acts as the prior knowledge that prevents the model from making the wrong relation prediction of `founded' between the entities `\textit{Quentin Fabris}' and `\textit{1947}' (date).
Similarly, in the final row, our \modelname filters out the incorrect relation `\textit{schools\_attend}', since
we model the entity type information from the whole dataset and thus enable the model to be aware that this relation cannot hold for the subject type as `\textit{organization}'.

In addition, in the second row, the sentence `\textit{According to the suspect, \underline{Gonzalez} was strangled and buried \uwave{the day} after the video was made, Rosas said.}'  attends to the topic of keywords `[strangle, die, after, when, injury, day, hospital, police, murder, later]' in our heterogeneous graph, which enriches the semantics of the sentence with the context related to the death and time.
This helps the model to make the correct relation prediction '\textit{date\_of\_death}'.

\section{Conclusion}
In this paper, we study the efficient message passing to enhance the relation extraction models.
We propose a novel method named \modelname, which provides efficient message passing between instances in the whole dataset.
\modelname is a model-agnostic technique that can be incorporated into popular relation extraction models to enhance their effectiveness without increasing their time complexity.
In our work, we present a simple yet effective implementation of \modelname, which models two universal and essential properties for relation extraction: entity information and textual context.
Our experimental results show that \modelname, with our heterogeneous graph, yields significant gains for the sentence-level relation extraction in an efficient manner. 

\section*{Acknowledgement}
The authors would like to thank the anonymous reviewers for their discussion and feedback.

Muhao Chen and Wenxuan Zhou are supported by the National Science Foundation of United States Grant IIS 2105329, and by the DARPA MCS program under Contract No. N660011924033 with the United States Office Of Naval Research.
Except for Muhao Chen and Wenxuan Zhou, this paper is supported by NUS ODPRT Grant R252-000-A81-133 and Singapore Ministry of Education Academic Research Fund Tier 3 under MOEs official grant number MOE2017-T3-1-007.

\bibliography{acl2020}
\bibliographystyle{acl_natbib}

\end{document}